\newcommand{\E}{\mathcal{E}}
\newcommand{\V}{\mathcal{V}}
\renewcommand{\arraystretch}{1.5} 
\title{VariAntNet: Learning Decentralized Control of Multi-Agent Systems}
\author{Yigal Koifman\thanks{These authors contributed equally to this work.}${}^{\dagger*}$,
        Erez Koifman\footnotemark[1]$ {}^{*}$\footnotemark[1],
        Eran Iceland ${}^{}$, 
        Ariel Barel ${}^{}$, and
        Alfred M. Bruckstein ${}^{}$
        }
\abstract{%
A simple multi-agent system can be effectively utilized in disaster response applications, such as firefighting. Such a swarm is required to operate in complex environments with limited local sensing and no reliable inter-agent communication or centralized control.
These simple robotic agents, also known as Ant Robots, are defined as anonymous agents that possess limited sensing capabilities, lack a shared coordinate system, and do not communicate explicitly with one another.
A key challenge for simple swarms lies in maintaining cohesion and avoiding fragmentation despite limited-range sensing. Recent advances in machine learning offer effective solutions to some of the classical decentralized control challenges.
We propose VariAntNet, a deep learning-based decentralized control model designed to facilitate agent swarming and collaborative task execution. VariAntNet includes a preprocessing stage that extracts geometric features from unordered, variable-sized local observations. 
It incorporates a neural network architecture trained with a novel, differentiable, multi-objective,  mathematically justified loss function that promotes swarm cohesiveness by utilizing the properties of the visibility graph Laplacian matrix.
VariAntNet is demonstrated on the fundamental multi-agent gathering task, where agents with bearing-only and limited-range sensing must gather at some location. VariAntNet significantly outperforms an existing analytical solution, achieving more than double the convergence rate while maintaining high swarm connectivity across varying swarm sizes. 
While the analytical solution guarantees cohesion, it is often too slow in practice and fails to meet the required convergence time. In time-critical scenarios, such as emergency response operations where lives are at risk, rapid convergence is crucial, making slower analytical methods impractical and justifying the loss of some agents within the swarm. This paper presents and analyzes this trade-off in detail.
}
\keywords{%
autonomous agents, distributed control, neural networks, multi-agents.
}
\begin{document}

\pagestyle{headings}

\maketitle

\begingroup
\renewcommand\thefootnote{}\footnotetext{${}^{*}$ These authors contributed equally to this work.}
\addtocounter{footnote}{-1}
\endgroup

\begin{figure}[!t]
\begin{center}
\hspace*{-4.5mm}
\includegraphics[width=1.07\columnwidth]{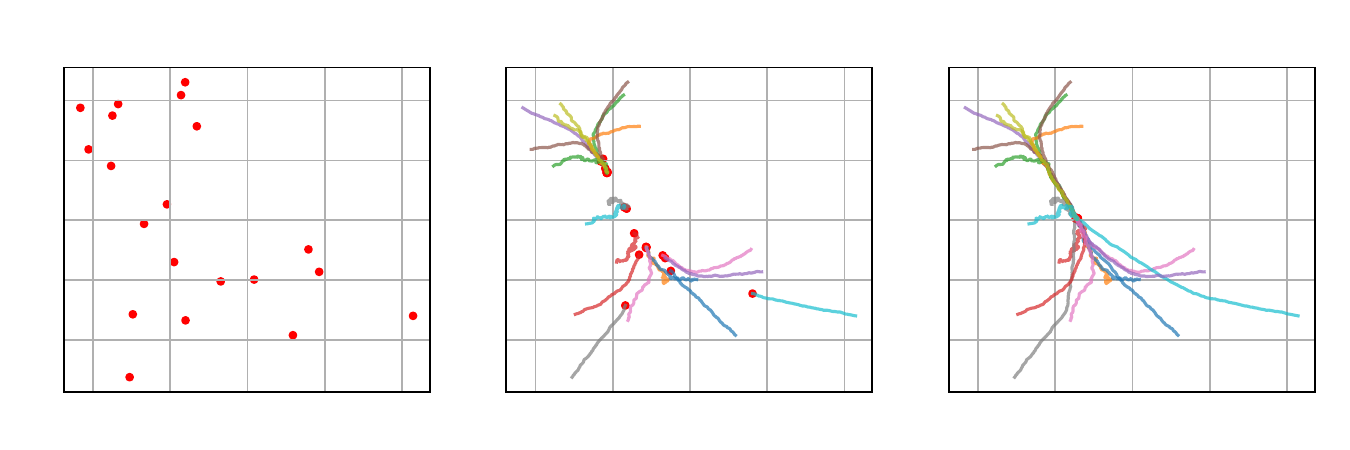}
\caption{An example of 20-agent swarm convergence using the VariAntNet model on a random constellation at steps 0, 75, and 200.}
\label{fig: An example of swarm convergence}
\end{center}
\end{figure}

\section{Introduction}
Swarm robotics, inspired by biological phenomena, has been extensively researched for disaster response, surveillance, and exploration applications.
Swarms are highly effective in complex tasks, particularly in harsh environmental conditions such as firefighting and natural disaster response ~\cite{roldan2021survey}. These environments pose significant challenges, such as unreliable communication and a lack of GPS or compass readings.
As shown in \cite{boanradio}, the intense heat in large fires ionizes the air, turning it into plasma, which degrades communication.
Therefore, decentralized and autonomous control is crucial for cooperation on collective tasks.

We present VariAntNet, a novel Deep Learning-based Decentralized Control (DLDC) that implements Centralized Training with Decentralized Execution (CTDE).
To our knowledge, unlike previous approaches, VariAntNet offers a novel combination of deep learning, geometric processing, and visibility graph Laplacian-based connectivity loss function for decentralized swarm control without communication.
VariAntNet is adaptive to unordered, varying-sized local observations while aiming to keep swarm cohesion.

To ensure the swarm is cost-effective and highly robust, we utilize Ant Robots, i.e., simple, inexpensive agents with limited sensing capabilities, no memory, no self-localization, and no inter-swarm communication. The agents sense their local environment and move accordingly. Lacking memory, an agent is incapable of either identifying or tracking its neighbors and must therefore rely solely on real-time observations.

Decentralized control of simple agent swarm has been extensively studied (see Section ~\ref{sec: Related Work}), primarily relying on analytical geometric algorithms.
In contrast, VariAntNet introduces an innovative DLDC framework that implements CTDE. 
During the training phase, the Neural Network (NN) optimizes a centralized loss function calculated based on the geometric constellation of the entire swarm. In decentralized inference, each agent poses a copy of the trained control network,  processes its local observation, and moves accordingly.

Certain geometric properties of agents' observations can significantly impact the design and performance of NN-based solutions. VariAntNet introduces a novel architecture that effectively addresses the following geometric properties:
\begin{itemize}
    \item \textbf{Unordered Observations:} Agents sense their environment and process the results in random order, creating multiple representations for identical observations. This results in different outputs for different representations of the same observation. VariAntNet produces the same result, regardless of the order in which neighboring agents are sensed.

    \item \textbf{Variable Observation Size:}
    Swarms contain varying numbers of agents. Furthermore, during convergence, each agent discovers an increasing number of neighbors, leading to observations of varying sizes. Traditional NNs are typically designed to address the maximal input length, using padding in partial inputs. This solution is inefficient in terms of time, number of parameters, and limits the input size. VariAntNet is designed to process effectively varying numbers of observed neighbors. 
    \item \textbf{Lack of Shared Coordinate System:} Agents operate without a common frame of reference. Each Agent detects its neighbors relative to its local coordinate system, which changes over time. As a result, geometrically identical constellations are represented differently depending on the agent's rotation.
\end{itemize}

The performance of VariAntNet is evaluated on the fundamental multi-agent gathering problem, where agents with bearing-only sensing and limited-range visibility need to converge at some location while maintaining cohesion, as shown in Fig.\ref{fig: An example of swarm convergence}.
The results of VariAntNet are compared to an analytical solution, with a focus on convergence rate and cohesiveness preservation.

In this study, we compare a deterministic, geometry-based method \cite{bellaiche2017continuous}, which guarantees convergence of all agents under all initial conditions, with a novel statistical approach based on machine learning, introduced here for the first time. Unlike the deterministic method, our statistical approach inherently carries a non-zero probability that some agents may fail to converge in certain scenarios. 

Despite this potential drawback, we investigate whether the statistical approach can offer a meaningful advantage in convergence rate. The deterministic method, while fully fault-tolerant, follows a conservative dynamic that limits acceleration, especially in the final phases of convergence, when agents are already in close proximity. Due to the limitations of bearing-only sensing, agents lack distance measurements, making it difficult (if not impossible) for the deterministic method to safely accelerate during late-stage convergence.

In contrast, the statistical method may generalize across a diverse range of geometric configurations encountered during training, potentially enabling more audacious motion strategies. We direct the reader to Fig.\ref{fig: Model Comparison}, which illustrates this hypothesis and summarizes the results of our empirical evaluation. Notably, the figure highlights a ``golden point'' 
in the design space: a 30-agent swarm initialized with a visibility ratio of $VR=0.875$, as explained in Section \ref{sec: Results}.

To summarize, our contribution is twofold: we propose a statistical approach that enables fast convergence, and an innovative, efficient, and compact architecture, tailored to the unique requirements of swarm systems, invariant to the size, order, and rotation of observations. The architecture is both adaptable and extensible, making it well-suited for studying additional swarm-based tasks and behaviors. Moreover, the innovative integration of the visibility graph Laplacian matrix provides a way to assess swarm connectivity in many other CTDE-based network designs.

\section{Related Work}
\label{sec: Related Work}

Decentralized control of swarms has been a longstanding research topic. Until recent years, it has primarily been studied using analytical methods that incorporate geometric considerations, gradient descent analysis, Lyapunov functions, and more. One of the classic problems studied in this field is the Gathering problem, where a group of autonomous agents is required to converge to a point or a bounded region without communicating for coordination while maintaining swarm connectivity to prevent swarm segmentation. The basic assumption is that agents are identical and equipped with limited-range, omni-directional sensors.
In a comprehensive survey \cite{barel2019come}, several analytical algorithms for solving the Gathering problem under different assumptions were introduced \cite{ando1999distributed}, \cite{gazi2003stability}, among many others.

Learning-based control methods show impressive success in solving swarm robotics problems. Some of these are described in surveys such as ~\cite{blais2023reinforcement},~\cite{nguyen2020deep}, and ~\cite{gronauer2022multi}.
Some of the learning methods focus on utilizing communication to improve agents’ learning capabilities in the environment, specifically on learnable communication protocols as described in~\cite{zhu2024survey}. These methods are distinct in their approaches and the various problem settings, restrictions, and different swarm tasks.
An actor-critic Reinforcement Learning (RL) approach was proposed in \cite{huttenrauch2017guided}, where a global observer critic employs deep Q-learning to enable limited-range sensing agents to avoid collisions and locate targets. \cite{huttenrauch2019deep} also suggested an RL-based approach for the rendezvous and evasion problem of limited-distance sensing agents, comparing different approaches for observation encoding, showing that NN-based encoding performs best in a bounded environment with partial localization and required inter-agent communication.

 A path planning algorithm with obstacle avoidance was introduced in \cite{islam2019path} for UAVs that need to collectively cover changing mission areas using a location-based Q-values method, based on a few basic discrete actions.

Another approach described in \cite{omidshafiei2017deep} relates to decentralized Multi-Agent Reinforcement Learning (MARL), and deals with generalizing across multiple tasks without explicit task identification in partial observability and a limited communication setup. The problem is defined within the Decentralized Partially Observable Markov Decision Processes (Dec-POMDPs) framework, and was solved by Decentralized Hysteretic Deep Recurrent Q-Networks (Dec-HDRQNs) and Concurrent Experience Replay Trajectories (CERTs) to stabilize decentralized learning and later distillization of the policies into a single, general multi-task policy.

VariAntNet introduces a simple and effective single-phase rotationally equivariant preprocessing, as depicted in Fig.~\ref {fig: rotational equivariance}, with an explicit geometric loss function, enabling robust and inherently scalable swarm size with no memory or tracking historical data and trajectories. 

One of the challenging tasks involves finite visibility and bearing-only sensing, where agents can measure relative direction to other agents but cannot estimate their distances. This specific case was studied by Bellaiche et al. ~\cite{bellaiche2017continuous}, who proposed a local motion rule that provably gathers agents in finite time. To achieve swarm convergence, each agent identifies the smallest sector of its visibility disk that bounds all its neighbors. If this sector spans an angle of less than $\pi$, the agent sets its velocity vector as the sum of the two unit vectors pointing toward its external neighbors. Otherwise, it remains stationary.

Our research presents a NN-based approach that provides a solution to the gathering problem under the constraints described in Section~\ref{sec: problem setting}. A comparison of the analytical solution and VariAntNet is presented and analyzed in Section~\ref{sec: experiments and results}.

\section{Problem Setting} \label{sec: problem setting}

As a testbed for VariAntNet, we tackled the multi-agent gathering problem of a swarm of agents where the desired behavior causes the swarm to gather to a small, bounded location while maintaining cohesion. The need for such algorithms arises in scenarios where agents operate in dangerous zones without communication and are equipped with bearing-only, limited-range sensors such as basic cameras.

\subsection {Assumptions}
\label{sec:Assumption }
We assume that agents are identical and indistinguishable, lack explicit communication capabilities, do not share a common geometric frame of reference, are oblivious, and have a bearing-only limited sensory range. The initial constellation of the swarm is assumed to be connected, and if an agent becomes disconnected from the swarm, it may not be able to rejoin, making separation undesirable.
\subsection {Notation}
\label{sec: Notation}
We denote agents' position at time $t$ as a vector \({P}(t)\triangleq\{{p}_i(t)\}_{i=1, 2, \ldots,	N} \text{, where }{p}_i(t)\triangleq\{(x_i,y_i)^T\}_{i=1, 2, \ldots,N},\)
and the distance between $p_i \text{ and } p_j$ as the Euclidean distance is defined as :
$$d(p_i, p_j) = \parallel{p_j} - {p_i} \parallel \triangleq{{\left((p_j-{p_i})^T ({p_j} -{p_i})\right)}}^{\frac{1}{2}},$$
We define $V$ as the agent's visibility range, meaning that agents $p_i, p_j$ mutually sense each other if \(d(p_i, p_j)\leq V\). 
Graphs are used to represent relationships between agents in multi-agent systems.
A swarm is represented by a visibility graph \( G(\mathcal{V}, {\E}) \), where:
\(\mathcal{V}\) = \(\{{\nu}_1, \nu_2, \ldots, \nu_n \} \) is the set of vertices representing the agents, and
\(\E \) is the set of edges:
$${\E}=\left\{\{\nu_i, \nu_j\}: \nu_i,\nu_j\in{\V} \space \land \space d(p_i,p_j)\leq V \right\}.$$
A swarm is said to be \textit{cohesive} if and only if the graph \( G(\V, \E) \) is a \textit{connected graph}, i.e., there exists a path between any two vertices \( \nu_i \) and \( \nu_j \) in \( G \). 

The agents sensed by agent $i$ at time $t$ are defined as the set of agent \(i\)'s neighbors, \( {\mathcal{N}}_i(t)\), in the visibility graph \( G( \V, \E) \):\\
$${\mathcal{N}}_i(t)= \left\{\nu_j \in  {\V}: \{\nu_i,\nu_j\} \in {\E} \land i \neq j \right\}.$$

Since the agents have bearing-only sensing capabilities, as depicted in Fig.~\ref{fig: neighbors notation}, agent \(i\)'s observation of agent \(j\) at time \(t\) is defined the unit vector, \(\hat{u}_{ij}\), pointing from \(p_i(t)\) to \(p_j(t)\):
$$\hat{u}_{ij}(t)=\frac{p_j(t) - p_i(t)}{\parallel p_j(t) - p_i(t) \parallel}$$
\begin{figure}[t!]
\begin{center}
\includegraphics[width=3cm]{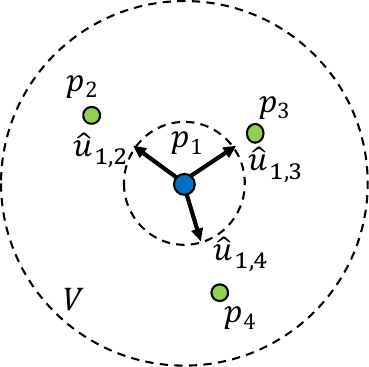}
\caption{Agent $p_1$ observation is given as unit vectors pointing toward its neighbors, constrained by bearing-only sensing with limited range \( V\).}
\label{fig: neighbors notation}
\end{center}
\end{figure}

Each agent's observation denoted by $O_i(t)$, is then defined as multi-set of unit vectors pointing at other agents that are visible to it:
$$O_i(t)=\{\hat{u}_{ij}(t) : \forall \nu_j \in {\mathcal{N}}_{i}(t)\}$$

The observation \(O_i(t)\) is also addressed as a matrix where each column is an observation:
$$O_i(t)=\begin{bmatrix}
 \hat{u}_{i1}(t),\hat{u}_{i2}(t),& \dots &,\hat{u}_{ij}(t) \\
\end{bmatrix}.$$

We define the observation space \(O\) as the set of all possible observations, which are all finite multi-sets of unit vectors.

Finally, we define an action as a directional unit vector \(\hat{u}\) and step size \(\sigma\), and an action space \(A\) as the set of all possible actions:
$$A=\left\{(\hat{u},\sigma):\hat{u}\in{\mathbb{R}} \space \land \|\hat{u}\|=1 \space\land\space 0 \leq \sigma \leq 1\right\}.$$

\section{Methodology}
The VariAntNet pipeline, illustrated in Fig.~\ref{fig: VariAntNet pipeline}, processes local observations sensed by each agent. It begins by applying a rotational transformation preprocessing step, described in \ref{sec: preprocessing}. The rotated observations are then fed into the NN, detailed in \ref{sec: nn architecture}, which generates the control action consisting of a movement direction and step size for the agent.

\begin{figure*}[h!]
\begin{center}
\includegraphics[width=0.9\textwidth]{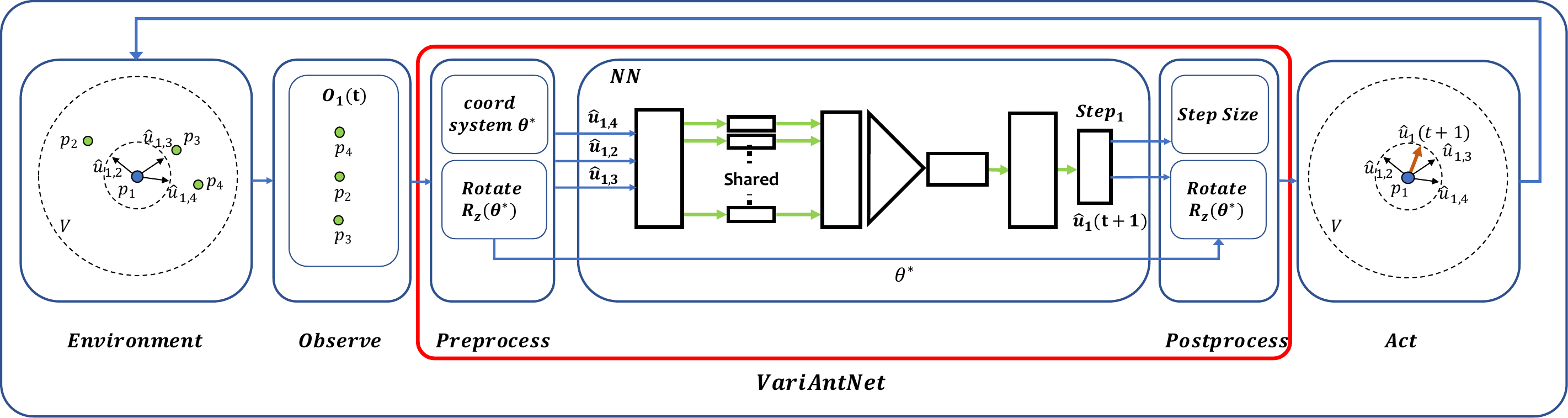}
\caption{The VariAntNet pipeline includes a preprocessing and postprocessing rotational equivariant transformation and an order invariant, variable input size NN. The red arrow in the `act' component represents the direction and the step size of the agents' motion.}
\label{fig: VariAntNet pipeline}
\end{center}
\end{figure*}

\subsection{Preprocessing Stage}\label{sec: preprocessing}
As each agent lacks information on its global position and cannot rely on a global coordinate system, its observations are relative to its own arbitrary coordinate system, represented by its x-axis, denoted as \(\hat{x}\). Moreover, geometrically identical local constellations may be observed differently depending on the agent’s rotation.
VariAntNet includes a preprocessing stage that generates a rotational equivariant representation of the observations, guaranteed through analytical transformations.
\newtheorem{definition}{Definition}
\begin{definition}
A function \(m:O\to A\) is considered equivariant w.r.t a transformation \(B \iff \forall O_i(t)\): \(m(B(O_i(t))) = B(m(O_i(t)))\).
\end{definition}
Here \(B\) is any two-dimensional rotational transformation defined as \(B(O_i(t)) = RO_i(t)\), where \(R\in SO(2)\), the group of all two-dimensional rotation matrices. 

This property, depicted in Fig.~\ref{fig: rotational equivariance}, ensures that when an agent's observation is rotated, the corresponding action undergoes the same rotation. 

To achieve this property, a coordinate system is defined based on the observation. The vector \(\overline{u}\) defines the x-axis of the new coordinate system: 
$$\overline{u}(O_i(t))=\frac{\sum_{j\in {\mathcal{N}}_i(t)}\hat{u}_{ij}(t)}{\|\sum_{j\in {\mathcal{N}}_i(t)}\hat{u}_{ij}(t)\|}=\frac{O_i(t){\mathbf{1}}_n}{\|O_i(t){\mathbf{1}}_n\|},$$
where \({\mathbf{1}}_n\) is a vector of ones of size \(n\).
In case \(\|O_i(t){\mathbf{1}}_n\|=0 \) no rotation is applied.
To rotate the observation to this coordinate system, the angle \(\theta^*\) between \(\overline{u}\) and \(\hat{x}\) is calculated by:
$$\theta^*=\cos^{-1}(\hat{x}^T\overline{u})\cdot\mathrm{sign}(\overline{u}_y)$$

The pre-processing transformation operation is defined as:
$$T_{\text{pre}}(O_i(t))=R(-\theta^*)^TO_i(t)$$
which rotates the observation by \(-\theta^*\) to the new coordinate system defined by $\overline{u}$,
and the post-processing transformation operation is defined as:
$$T_{\text{post}}(a)=R(\theta^*)a$$
which rotates the action of the NN output \(a\in A\) by \(\theta^*\) back to the original coordinate system of the observation.\\
\newtheorem{lemma}{Lemma}

\begin{lemma}
  For any function \(m:O\to A\) and observation \(O_i(t)\in O\) s.t \(\|O_i(t){\mathbf{1}}_n\|\neq 0\) it holds that $T_{\text{post}}(m(T_{\text{pre}}(O_i(t))))$ is \textbf{equivariant} w.r.t $R\in SO(2)$.\\
\end{lemma}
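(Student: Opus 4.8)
The plan is to prove equivariance of the composite map $\tilde m := T_{\text{post}}\circ m\circ T_{\text{pre}}$, i.e.\ that $\tilde m(RO_i(t)) = R\,\tilde m(O_i(t))$ for every $R\in SO(2)$, where $R$ acts on an action $(\hat u,\sigma)\in A$ by rotating the direction $\hat u$ and leaving the step size $\sigma$ fixed. I read $T_{\text{pre}}$ as applying the unique rotation $R_{\mathrm{pre}}(O)\in SO(2)$ that carries $\overline u(O)$ onto the reference axis $\hat x$, and $T_{\text{post}}$ as applying its inverse $R_{\mathrm{pre}}(O)^{-1}$; this is exactly what the formulas with $\theta^*=\cos^{-1}(\hat x^T\overline u)\cdot\mathrm{sign}(\overline u_y)$ encode, once one adopts the natural convention $\theta^*=\pi$ for the single edge direction $\overline u=(-1,0)^T$ where $\mathrm{sign}(0)=0$. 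The argument then splits into: (a) equivariance of the reference direction $\overline u$; (b) the induced transformation law for $R_{\mathrm{pre}}$; (c) invariance of the preprocessed observation; (d) assembly.

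For (a): $\overline u(O_i(t))$ is the normalization of $O_i(t)\mathbf 1_n$, and since $R$ is linear and orthogonal, $(RO_i(t))\mathbf 1_n=R\big(O_i(t)\mathbf 1_n\big)$ and $\|(RO_i(t))\mathbf 1_n\|=\|O_i(t)\mathbf 1_n\|\neq 0$; hence the construction is well defined for $RO_i(t)$ as well, and $\overline u(RO_i(t))=R\,\overline u(O_i(t))$.

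For (b): I would invoke the simple transitivity of $SO(2)$ on the unit circle. The matrix $R_{\mathrm{pre}}(O_i(t))\,R^{-1}$ sends $\overline u(RO_i(t))=R\,\overline u(O_i(t))$ to $R_{\mathrm{pre}}(O_i(t))\,\overline u(O_i(t))=\hat x$; since the element of $SO(2)$ mapping a given unit vector to $\hat x$ is unique, $R_{\mathrm{pre}}(RO_i(t))=R_{\mathrm{pre}}(O_i(t))\,R^{-1}$, and therefore $R_{\mathrm{pre}}(RO_i(t))^{-1}=R\,R_{\mathrm{pre}}(O_i(t))^{-1}$. Working with $SO(2)$ matrices rather than with the angle $\theta^*$ itself is what avoids any $\bmod\,2\pi$ bookkeeping around the branch of $\cos^{-1}$.

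For (c) and (d): from (b), $T_{\text{pre}}(RO_i(t))=R_{\mathrm{pre}}(RO_i(t))\,(RO_i(t))=R_{\mathrm{pre}}(O_i(t))\,R^{-1}R\,O_i(t)=R_{\mathrm{pre}}(O_i(t))\,O_i(t)=T_{\text{pre}}(O_i(t))$, so $m$ is fed the identical input in both cases; write $a:=m(T_{\text{pre}}(O_i(t)))$. Then $\tilde m(RO_i(t))=R_{\mathrm{pre}}(RO_i(t))^{-1}\,a=R\,R_{\mathrm{pre}}(O_i(t))^{-1}\,a=R\,\tilde m(O_i(t))$, which is the claim. I expect the only genuine difficulty to be the bookkeeping around $\theta^*$: its dependence on the principal branch of $\cos^{-1}$ and on $\mathrm{sign}$, including the degenerate direction $\overline u=(-1,0)^T$ and the $\|O_i(t)\mathbf 1_n\|=0$ case excluded by hypothesis. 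Recasting steps (b)--(d) purely in terms of ``the unique rotation aligning $\overline u$ with $\hat x$'' is precisely what makes these annoyances evaporate.
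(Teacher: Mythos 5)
Your proposal is correct and follows essentially the same route as the paper: equivariance of \(\overline{u}\) under rotation, the induced transformation of the aligning rotation, invariance of the preprocessed observation \(T_{\text{pre}}(RO_i(t)) = T_{\text{pre}}(O_i(t))\), and then pushing the common output \(a = m(T_{\text{pre}}(O_i(t)))\) through \(T_{\text{post}}\). The only deviation is that you justify the middle step via uniqueness of the element of \(SO(2)\) carrying \(\overline{u}\) onto \(\hat{x}\), whereas the paper uses the additive angle relation \(\theta^*(RO_i(t)) = \theta' + \theta\); this is a cosmetic (if slightly cleaner) variation that also sidesteps the \(\cos^{-1}\)/\(\mathrm{sign}\) branch bookkeeping and the degenerate direction \(\overline{u} = (-1,0)^T\), which the paper's formula leaves implicit.
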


\begin{proof}
    Let \(O_i(t)\in O\), \(R\in SO(2)\) and \(0\leq\theta\leq2\pi\) s.t \(R=R(\theta)\). Let \(\hat{x}\) be the x-axis direction in the coordinate system of \(O_i(t)\). Let \(\theta' =\theta^*(O_i(t))\) be the angle between \(\overline{u}(O_i(t))\) and \(\hat{x}\), and define \(R'=R(\theta')\). \\
    Since \(R\in SO(2)\), for all \(v\in{\mathbb{R}}^2\) it holds that \(\|Rv\|=\|v\|\), therefore:
    \[\overline{u}(RO_i(t))=\frac{RO_i(t)\mathbf{1}_n}{\|RO_i(t)\mathbf{1}_n\|}=R\frac{O_i(t)\mathbf{1}_n}{\|O_i(t)\mathbf{1}_n\|}=R\overline{u}(O_i(t))\]
    As a result the angle between \(\overline{u}(RO_i(t))\) and \(\overline{u}(O_i(t))\) is \(\theta\). Therefore:
    $$\theta^*(RO_i(t))=\theta'+\theta$$
    So it holds that:
    $$T_\text{pre}(RO_i(t))=R(\theta^*(RO_i(t)))^TRO_i(t)=$$
    $$=R(\theta'+\theta)^TRO_i(t) =R'^TR^TRO_i(t)=$$
    $$=R'^TO_i(t)=T_\text{pre}(O_i(t))$$
    Since \(m\) is a function, it follows that \(m\)'s output \(a\in A\) is the same for both \(O_i(t)\) and \(RO_i(t)\):
    $$a=m(T_\text{pre}(RO_i(t)))=m(T_\text{pre}(O_i(t)))$$
    By consequence, it holds that:
    $$T_{\text{post}}(m(T_{\text{pre}}(RO_i(t))=RR'm(T_{\text{pre}}(RO_i(t))=$$
    $$=RT_{\text{post}}(m(T_{\text{pre}}(O_i(t))$$
\end{proof}

\begin{figure}[h!]
    \begin{minipage}[t]{0.45\columnwidth}
        \centering
        \includegraphics[width=3cm]{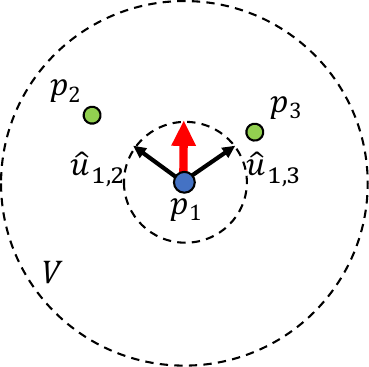}
        \parbox{\columnwidth}
        {\caption*{(a) Bearing-only Sensing of agent \(p_1\).}}
        \label{fig: A}
    \end{minipage}\hfill
    \begin{minipage}[t]{0.45\columnwidth}
        \centering
        \includegraphics[width=3cm]{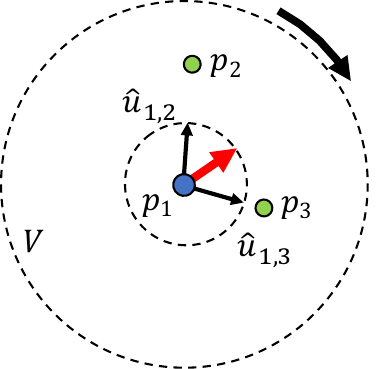}
        \parbox{\columnwidth}
        {\caption*{(b) Bearing-only Sensing of agent \(p_1\) after rotation.}}
        \label{fig: B}
    \end{minipage}
    \vspace{-10pt}
    \caption{Illustrations \((a)\) and \((b)\) demonstrate rotational equivariance. The constellations and input representation are considered equivalent for \(p_1\)'s decision process.}
    \label{fig: rotational equivariance}
    \vspace{-10pt}
\end{figure}

\subsection{VariAntNet's NN Architecture}\label{sec: nn architecture}
VariAntNet's NN architecture, inspired by PointNet ~\cite{qi2017pointnet}, consists of two blocks as illustrated in Fig.~\ref{fig: architecture functions}. The first NN block, described by $f_\phi$, extracts local geometric features from each detected agent. These features are aggregated by a global max pooling operator to a local feature vector \(LF\), which represents the agent's detected neighborhood. The \(LF\) is then 
passed to the following NN block, described by $g_\psi$, which produces the agent's action \(a\in A\). The NN can be mathematically described as:
$$LF_i(t)=\max_{j\in{\mathcal{N}}_{i}(t)}f_{\theta}(u_{ij}(t)), \quad a_i(t)=g_{\psi}(LF_i(t))$$

This architecture offers several key benefits:
\begin{itemize}
    \item \textbf{Variable Input Size:} By max pooling, the network can process inputs of varying sizes, since max pooling does not have size constraints.
    \item \textbf{Invariance to Input Order:} 
    Max pooling ensures that the order of observations does not affect the output of $LF$ representation, which serves as input to $g_{\psi}$.
    \item \textbf{Small Parameter Count:} The network’s architecture ensures that the number of parameters isn't affected by the swarm size. The network maintains a small parameter count, achieving high computational efficiency, as shown in Table~\ref{tab: nn_parameters}. The network controls the local behavior at over 1,000 FPS on a CPU, making it suitable for a deployment on low computational power.
\end{itemize}

\begin{figure}[h]
    \begin{center}
        \includegraphics[width=0.8\columnwidth]{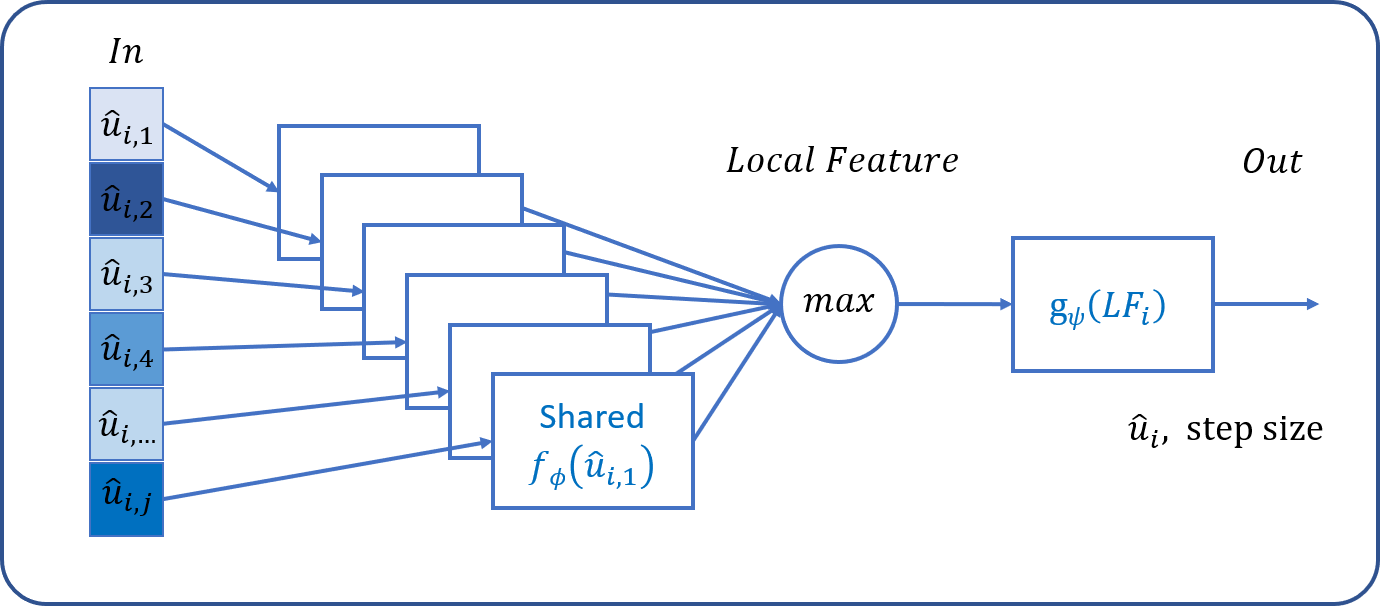}
        \caption{NN structure consists of $f_\phi(\hat u_{i,j})$, a shared MLP that encodes observations, a max pooling layer for extracting the agent's local features, and a final MLP denoted $g_{\psi}(LF(f_\phi(\hat u_{i,j})))$, which determines the agent's next step direction and size.}
        \label{fig: architecture functions}
    \end{center}
\end{figure}

\setlength{\tabcolsep}{4pt}
\begin{table}[h]
\centering
\normalsize
\caption{The VariAntNet pipeline consists of an 8-layer neural network (NN) with 3,875 parameters.}
\renewcommand{\arraystretch}{1.1}
\begin{tabular}{c l c c c c }
    \hline
    \textbf{Layer}  & \textbf{Type}   & \textbf{In}    & \textbf{Out}  &  \textbf{Act. func.}   & \textbf{Params} \\
    \hline
    1               & Shared FC       & \( 2 \)           & \( 16 \)         & \(Tanh \)              & 48        \\ 
    2               & Shared FC       & \( 16 \)          & \( 32 \)         & \(Tanh\)               & 544       \\ 
    4               & Shared FC       & \( 32 \)          & \( 16 \)         & \(Tanh\)               & 528       \\ \hline     
    3               & Max Pool       & \( 16 \)          & \( 16 \)         & -                      & -         \\ \hline
    4               & FC              & \( 16\)           & \( 32 \)         & \(Tanh\)               & 544       \\ 
    5               & FC              & \( 32 \)          & \(32 \)          & \(Tanh \)              & 1,056      \\ \hline    
    7               & Direction (FC)  & \(32 \)           & \( 2 \)          & \(Tanh\)               & 66        \\ 
    7               & FC       & \(32 \)           & \( 32 \)          & \(Sigmoid\)            & 33        \\ 
    8               & Step size (FC)       & \(32 \)           & \( 1 \)          & \(ReLU\)            & 1,056        \\ 
    
    \hline     
    \textbf{Total}  & -               & -                 & -                & -                      & \textbf{3,875} \\
    \hline
\end{tabular}
\label{tab: nn_parameters}
\end{table}

\subsection{Geometric Loss Function} \label{sec: Geometric Loss Function}
VariAntNet's loss function \(\mathcal{L}\) is composed of two functions.
While the swarm's primary task can be quantified using a single task loss function, \(\mathcal{L_{\text{Task}}}\), it tends to disconnect in order to reach local minima of \(\mathcal{L_{\text{Task}}}\). To address this, an additional loss function, \(\mathcal{L}_{\text{Cohesiveness}}\), is introduced to measure swarm connectivity. This secondary loss function is required to be continuous, differentiable, and computationally efficient, ensuring effective gradient propagation without impeding the training process.
The loss, \(\mathcal{L}\), is defined as:
$${\mathcal{L}}= \alpha\mathcal{L}_{\text{Cohesiveness}} + \beta\mathcal{L}_{\text{Task}}$$
The constants $\alpha$ and $\beta$ enable amplification of each one of the loss functions, thus affecting \(\mathcal{L}\). The values that were used are $\alpha=1$ and $\beta =1 $ Wherever it is not mentioned \\

\subsubsection{Task Loss}
The task loss, denoted by $\mathcal{L}_\text{Task}$, quantifies the swarm's success in fulfilling the task requirements.

To enhance learning efficiency, the maximum distance from the swarm's center of mass is used, defined as:
$${\mathcal{L}}_{Task}(t)=\max_{p_i\in P(t)} \left\|p_i(t)-\frac{1}{|P(t)|}\sum_{p_j\in P(t)}p_j(t)\right\|$$


\subsubsection{Cohesiveness Loss}
The Cohesiveness loss function is based on a continuous and differentiable bound for Cheeger's constant~\cite{cheeger1970lower} of the swarm's visibility graph.
The visibility graph is enhanced by introducing an edge-weighting function that reflects the proximity between neighboring agents. Each edge weight reaches its maximum when the connected agents are located at the same position and decreases smoothly as the distance between them increases, eventually reaching zero when mutual visibility is lost. The distances are not available to the agents and are used only for the NN's training. 
The weight function is defined as:
$$ w: {\E} \to {\mathbb{R}}, \quad w(\nu_i, \nu_j)=V - d(p_i,p_j).$$
Similarly to \cite{walchesseneigenvalues}, this definition is extended for sets of vertices, defined as the sum of weighted degrees of the vertices in the set:
$$w:{\mathcal{P}}({\V}) \to {\mathbb{R}},\quad w(\Omega)=\sum_{\nu_i\in\Omega}\sum_{\{\nu_i,\nu_j\}\in {\E}}w(\nu_i,\nu_j)$$
and also for sets of edges, defined as the sum of weights of all edges in the set:
$$w:{\mathcal{P}}({\E}) \to {\mathbb{R}},\quad w(S)=\sum_{\{\nu_i,\nu_j\}\in S}w(\nu_i,\nu_j).$$
The edge boundary of $\Omega$ is defined as the set of edges connecting sub-graph \(\Omega\) to the rest of the graph:
$$\partial\Omega=\big\{\{\nu_i,\nu_j\}\in {\E}:\space \nu_i\in \Omega \space \land \space \nu_j \in {\V}/\Omega\big\}.$$
The Cheeger's constant of the visibility graph, shown in \cite{cheeger1970lower} quantifies its connectivity \cite{marsden2013eigenvalues}\cite{anderson1985eigenvalues}, and is defined as:
$$h(G)=\underset{\Omega\subset{\V},\space w(\Omega)\leq\frac{1}{2}w({\V})}{\min} \left\{ \frac{w(\partial\Omega)}{w(\Omega)}\right\}$$
where \(w(\partial \Omega)\) quantifies the connectivity of \(\Omega\) to the rest of the graph, and \(w(\Omega)\) quantifies the inner connectivity of \(\Omega\).
Approaching disconnection leads to a decrease in \(h(G)\), as the swarm tends to separate into subgroups rather than into individually disconnected agents. Hence, when the swarm approaches separation, the inner connectivity of each component, \(w(\Omega)\), is unaffected while the global connectivity between the components, \(w(\partial\Omega)\), decreases.
Therefore, by minimizing \(\frac{1}{h(G)}\), the NN learns to keep the swarm cohesive. 

Furthermore, when the swarm approaches disconnection, this function becomes infinitely bigger than the \(\mathcal{L}_\text{Task}\), focusing the learning on cohesiveness. 
Unfortunately, calculating \(h(G)\) directly is impractical, making it necessary to rely on a lower bound.\\

As shown in \cite{walchesseneigenvalues}, such a bound can be achieved by utilizing properties of the Laplacian matrix.
The Laplacian matrix of \(G\) is defined by:
$$L=D-A,$$
where \( A \) is the weighted Adjacency matrix: 
    $$A_{ij}=\begin{cases}w(\nu_i,\nu_j) & \{\nu_i,\nu_j\}\in \E\\0 & \text{otherwise}\end{cases}$$
and \( D \) is the Diagonal degree matrix $$D_{ij} = \begin{cases}\sum_{\{\nu_i,\nu_k\}\in \E} w(\nu_i,\nu_k) & i=j \\ 0 & \text{otherwise}\end{cases}$$
By analyzing its eigenvalues, denoted as:
$$0=\lambda_1\leq \lambda_2\leq\dots \lambda_n,$$ and utilizing \(\lambda_2\), also known as the algebraic connectivity, a lower bound of \(h(G)\) is achieved:
$$h(G)\geq\frac{\lambda_2}{2}$$
By using this bound, the loss function \(\frac{1}{h(G)}\) can be optimized using \(\frac{1}{\lambda_2}\), since \(\lambda_2\) is continuous and differentiable with respect to the graph's weights, and can be efficiently computed during training. The cohesiveness loss function is defined as:
$${\mathcal{L}}_\text{Cohesiveness}=\frac{1}{\lambda_2(G)+\varepsilon}$$
where \(0<\varepsilon \ll 1\) is a small value that is added for stability. In our case, the value of \(\varepsilon\) is chosen to be \(10^{-6}\).

\section{Datasets and Curriculum}
The agent's initial constellation significantly influences the swarm's ability to maintain cohesiveness, as agents may lose connection with their neighbors positioned at the edge of their visibility range. 
In our work, a dataset generator was developed to produce initial constellations under varying geometric constraints, used for training, validation, and evaluation.

\subsection{Dataset Features}
The dataset constellation generator produces randomly positioned, connected initial constellations with varying numbers of agents, visibility ranges, and difficulty levels.
The difficulty level is determined by the initial spatial distribution of the agents. We define the effective visibility range, $V_{eff}$= $V\times VR$, as the product of the visibility range $V$ and a visibility ratio $VR$, where $0< VR \leq 1$. During constellation generation, agents are placed such that each agents has at least one neighbor within its $V_{eff}$. As $VR$ increases, so does the difficulty level, since the potential for disconnection rises accordingly. Our predefined difficulty levels are described in Fig.~\ref{fig: constellation generator fig}.

\begin{figure}[h!]
    \begin{minipage}[t]{0.29\columnwidth}
        \centering
        \includegraphics[width=1\columnwidth]{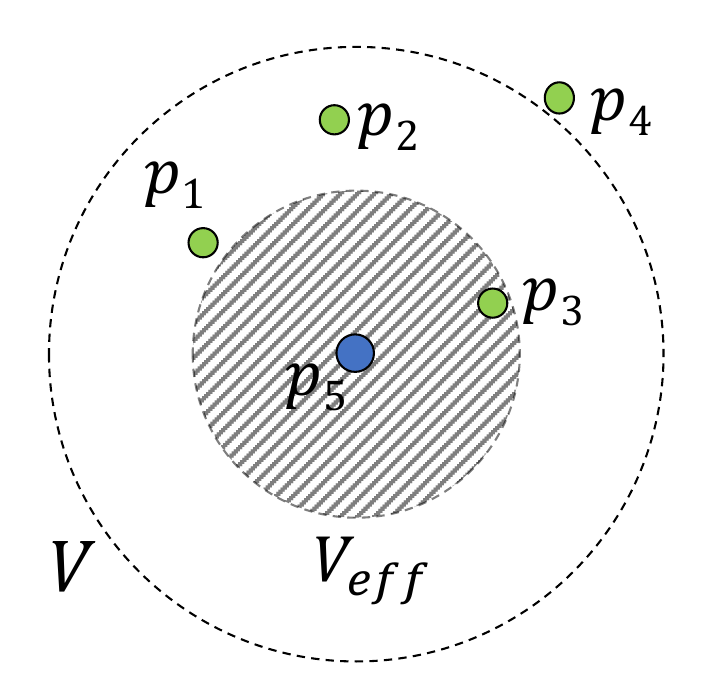}
        \caption*{\centering (a) Regular constellation $V_{eff}=0.5 \times V$}
        \label{fig: easy costellation}
    \end{minipage} \hspace{0.29cm}
    \begin{minipage}[t]{0.30\columnwidth}
        \centering
        \includegraphics[width=1\columnwidth]{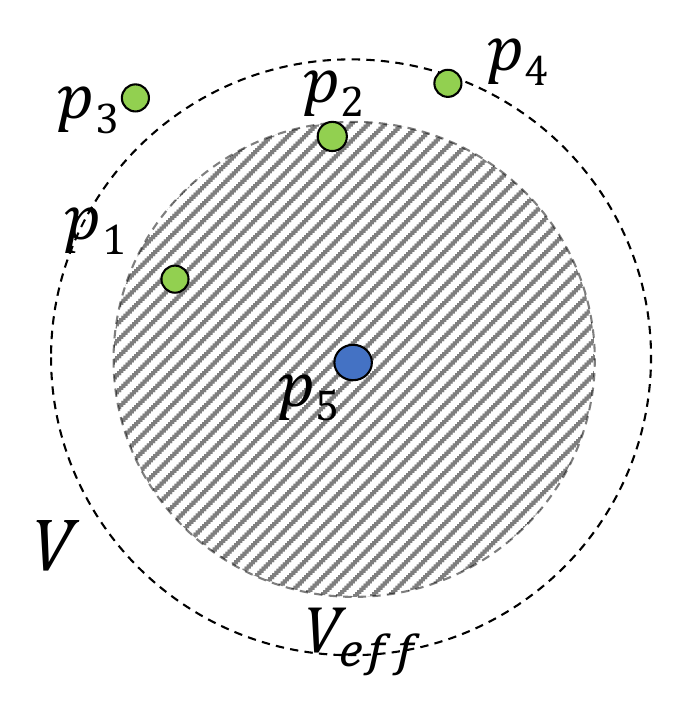}
        \caption*{\centering (b) Challenging constellation $V_{eff}=0.75 \times V$}
        \label{fig: medium constellation}
    \end{minipage}  \hspace{0.29cm}
    \begin{minipage}[t]{0.30\columnwidth}
        \centering
        \includegraphics[width=1\columnwidth]{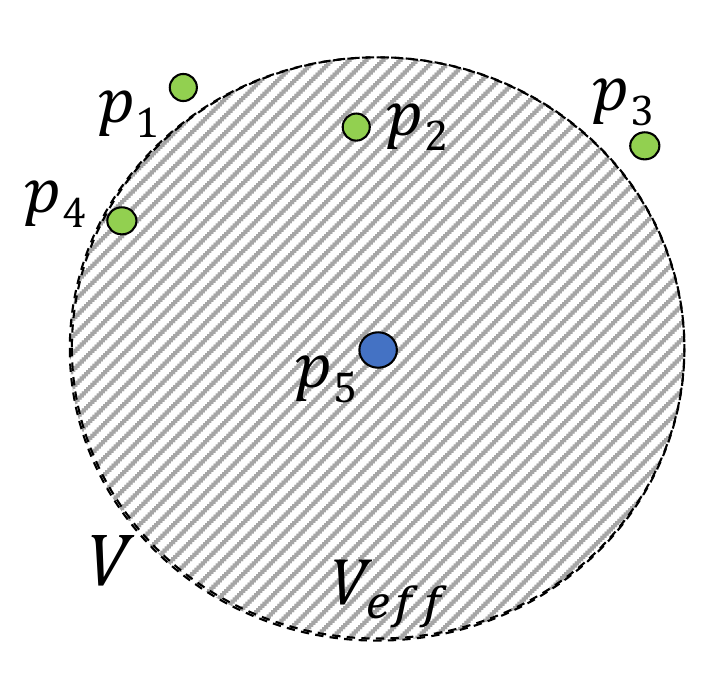}
        \caption*{\centering (c) Marginal constellation $V_{eff}=1 \times V$}
        \label{fig: Marginal constellation}
    \end{minipage}
    \vspace{-5pt}
    \caption{
    The illustrations depict a successful positioning for \(p_5\) in various initial constellations, since in every gray area there is at least one neighboring agent.}
    \label{fig: constellation generator fig}
    \vspace{-10pt}
\end{figure}

\subsection{Constellation Generator}
The initial constellation generator is described by a pseudocode of Algorithm \ref{alg: constellation}.
The flow begins by setting the effective visibility range, $V_{eff}$, determined by $VR$. The first agent is randomly placed within a predefined bounded area. To achieve uniformity, agents are added sequentially, each randomly positioned. If the newly placed agent lies within a distance of $V_{eff}$ of any previously placed agent, the process continues. Otherwise, a new position is sampled until this criterion is met.



To prevent any information from being embedded in the configuration (e.g., ensuring that $p_1$ will not always sense the agent with the subsequent index $p_2$), the indices of the agents are shuffled.


\begin{algorithm}[h!]
\caption{Constellation Generator}\label{alg: constellation}
\begin{algorithmic}[1]
\Require $num\_agents, visibility, boundary, seed, \newline visibility\_ratio$
\Ensure $connected$ $constellation$ or $raise$ $error$
\State $V_{eff} \gets visibility \times visibility\_ratio$
\State $P \gets \{\}$
\State $set\_random\_seed(seed$)
\For{$tries$ in $range(max\_attempts)$}
    \State Generate initial position: $P \gets rand(boundary)$
    \For{$index$ in $num\_robots - 1$}
        \For{$attempts$ in $max\_attempts$}
            \State $p_{next} \gets rand(boundary)$
            \State $distances \gets norm(P,p_{next})$
            \State \text{\textbf{if } $ any(distances < V_{eff})$ \textbf{then}} 
            \State \textbf{\quad break}
            
        \EndFor
            \State \text{\textbf{if }$attempts = max\_attempts$ \textbf{then}}
            
            \State \text{\quad \textbf{break}} \Comment{fail to position agent}
            \State \textbf{else:}
            \State $\quad boundary \gets boundary \cup \text{visibility}(p_{next})$
            \State $\quad P \gets P \cup p_{next}$  
    \EndFor
    \State \text{\textbf{if }$index < num\_agents$ \textbf{then}} 
    \State \textbf{\quad break} \Comment{reinit constellation}  \par
\EndFor
\State \text{\textbf{if }$tries = max\_attempts$ \textbf{then}} \par
\State \quad \textbf{raise error:} 'Failed to create constellation'

\State $P \gets shuffle(P)$
\State \Return $P$
\end{algorithmic}
\end{algorithm}

\subsection{Curriculum}

VariAntNet variations were trained using curriculum learning to ensure training stability, as summarized in Table~\ref {tab: curriculum}. Without this strategy, exposing the network to scenarios with high \(V_{eff}\) early in training frequently leads to swarm fragmentation, resulting in elevated loss values and driving the model into a negative feedback loop.

A dataset is produced for each stage in the curriculum using Algorithm~\ref{alg: constellation} with varying $VR$. The number of agents used for the training datasets is 10. The models are trained on a dataset with various numbers of epochs, steps per scenario, and a changing number of environments (scenarios). The stages gradually increase the level of difficulty and number of steps to train the VariAntNet in initial and final constellations.
\setlength{\tabcolsep}{4pt}
\begin{table}[h]
\centering
\caption{VariAntNet training stages, divided into 3 groups: A, B and C, according to the number of epochs, steps, environments, and learning rate.}
\renewcommand{\arraystretch}{1.1}
\begin{tabular}{c c c c c c c }
    \hline
    \multicolumn{2}{c}{\textbf{Stage}}  & $\mathbf{VR}$  & \textbf{Epochs}    & \textbf{Steps}  &  \textbf{Env.}   & \textbf{Learning} \\
    \textbf{}       & \textbf{ }      & \textbf{}     & \textbf{}    &  \textbf{}     & \textbf{} & \textbf{Rate}\\ \hline
    \multirow{4}{*}{\rotatebox{0}{A}} & 1 & 0.3 & 5 & 200 & 120 & \(5 \mathrm{e}{-5}\) \\
    & 2 & 0.4 & 5 & 200 & 120 & \(5 \mathrm{e}{-5}\) \\
    & 3 & 0.5 & 10 & 200 & 120 & \(5 \mathrm{e}{-5}\) \\
    & 4 & 0.6 & 25 & 200 & 120 & \(5 \mathrm{e}{-5}\) \\
    \hline
    \multirow{4}{*}{\rotatebox{0}{B}} & 5 & 0.65 & 10 & 200 & 120 & \(5 \mathrm{e}{-6}\) \\
    & 6 & 0.7 & 35 & 200 & 120 & \(5 \mathrm{e}{-6}\) \\
    & 7 & 0.75 & 35 & 200 & 120 & \(5 \mathrm{e}{-6}\) \\
    & 8 & 0.8 & 35 & 200 & 120 & \(5 \mathrm{e}{-6}\) \\
    \hline
    \multirow{6}{*}{\rotatebox{0}{C}} & 9 & 0.75 & 35 & 500 & 180 & \(5 \mathrm{e}{-6}\) \\
    & 10 & 0.8 & 35 & 500 & 180 & \(5 \mathrm{e}{-6}\) \\
    & 11 & 0.85 & 35 & 500 & 180 &\(5 \mathrm{e}{-6}\) \\
    & 12 & 0.9 & 35 & 500 & 180 & \(5 \mathrm{e}{-6}\)\\
    & 13 & 0.95 & 35 & 500 & 180 & \(5 \mathrm{e}{-6}\) \\
    & 14 & 1.0 & 35 & 500 & 180 & \(5 \mathrm{e}{-6}\) \\
    \bottomrule
    \hline
\end{tabular}
\label{tab: curriculum}
\end{table}


\section{Experiments and Results}\label{sec: experiments and results}

\begin{figure*}[!ht]
    \begin{minipage}[t]{0.32\textwidth}
        \centering
        \includegraphics[width=1\textwidth]{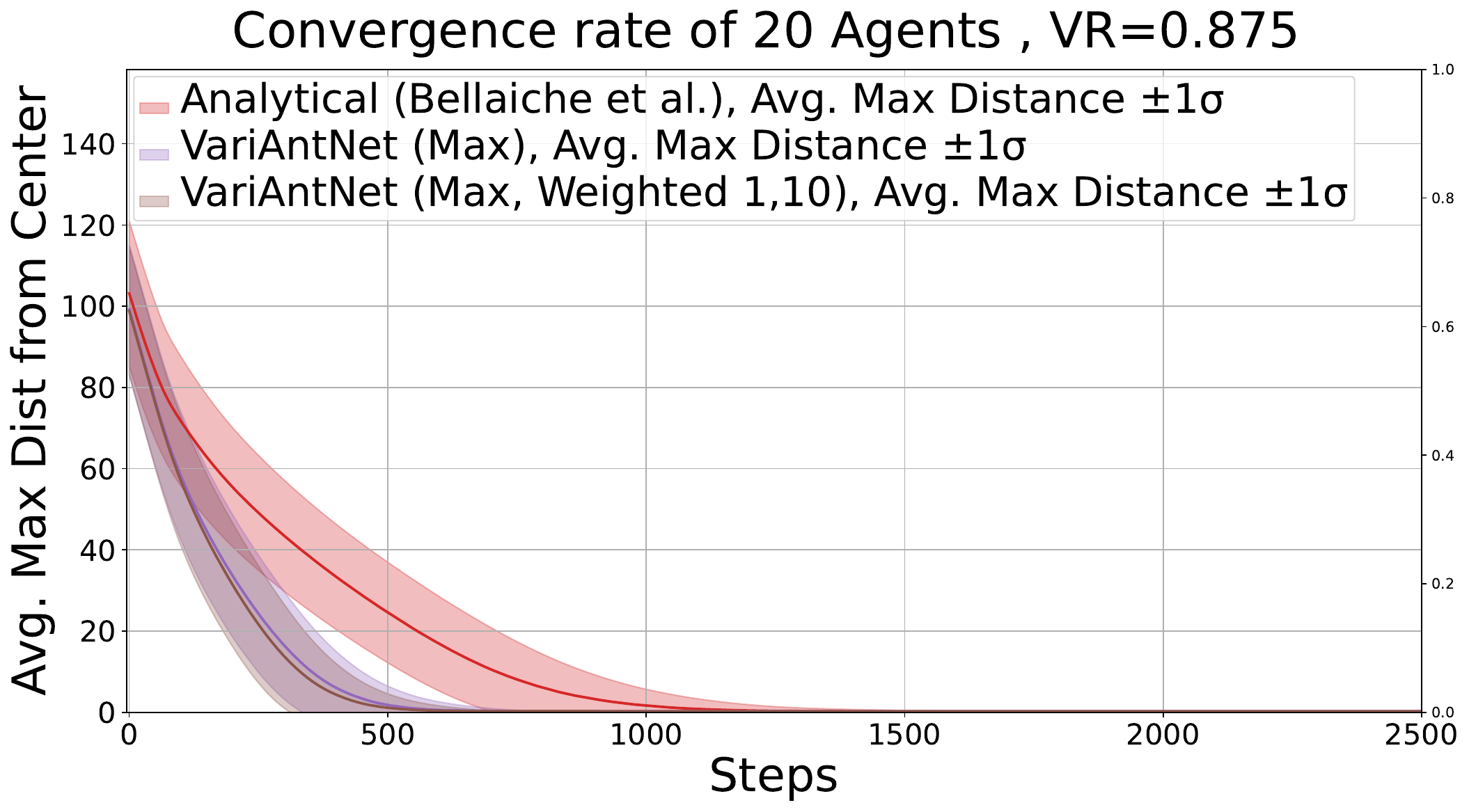}
        \parbox{0.9\textwidth}{\caption*{(a) 20 Agents, initial $VR=0.875$}}
        \label{fig: Regular level costellation}
    \end{minipage}\hfill
    \begin{minipage}[t]{0.32\textwidth}
        \centering
        \includegraphics[width=1\textwidth]{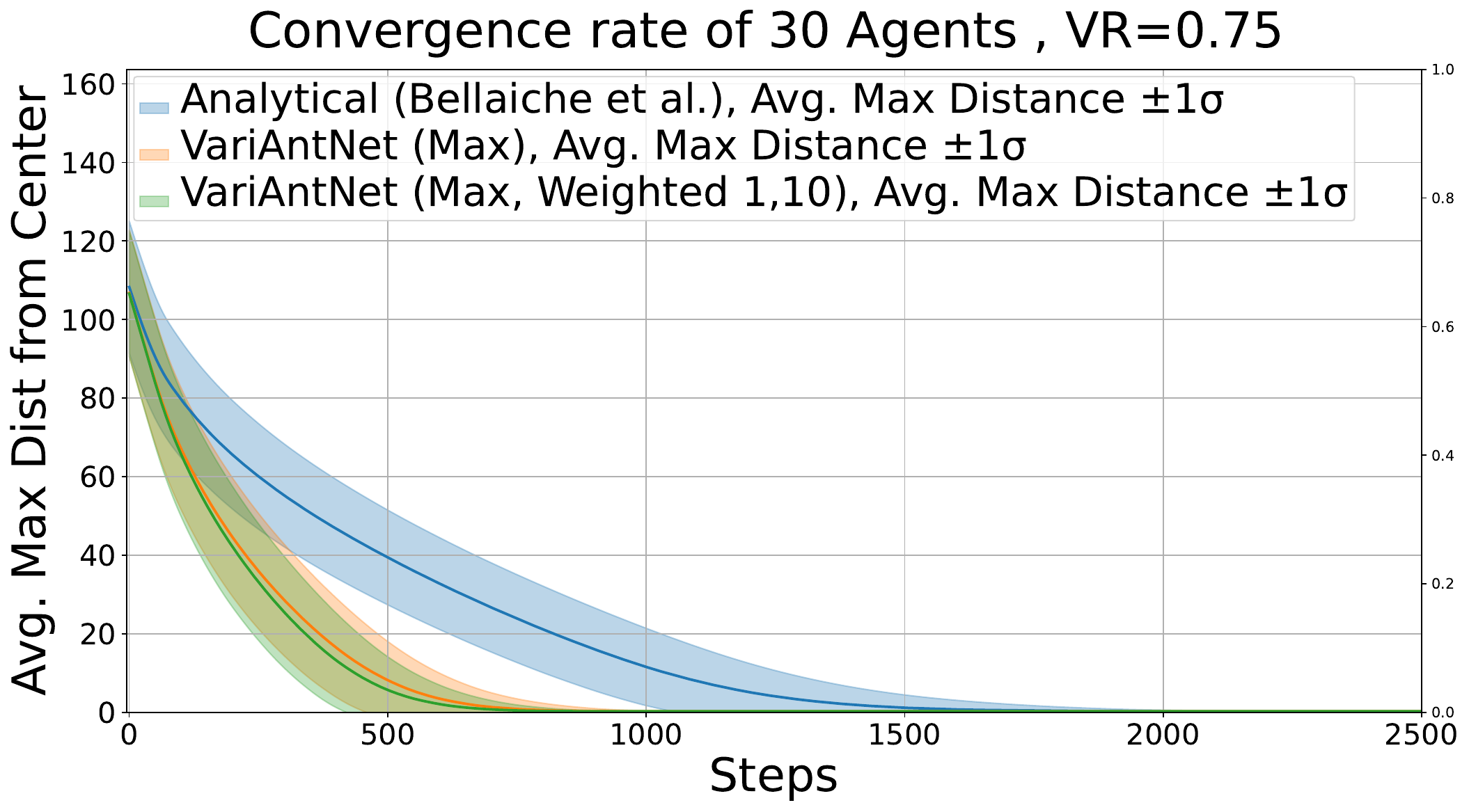}
        \parbox{0.9\textwidth}{\caption*{(b) 30 Agents, initial $VR=0.75$}}
        \label{fig: Challenging level constellation}
    \end{minipage}
    \begin{minipage}[t]{0.32\textwidth}
        \centering
        \includegraphics[width=1\textwidth]{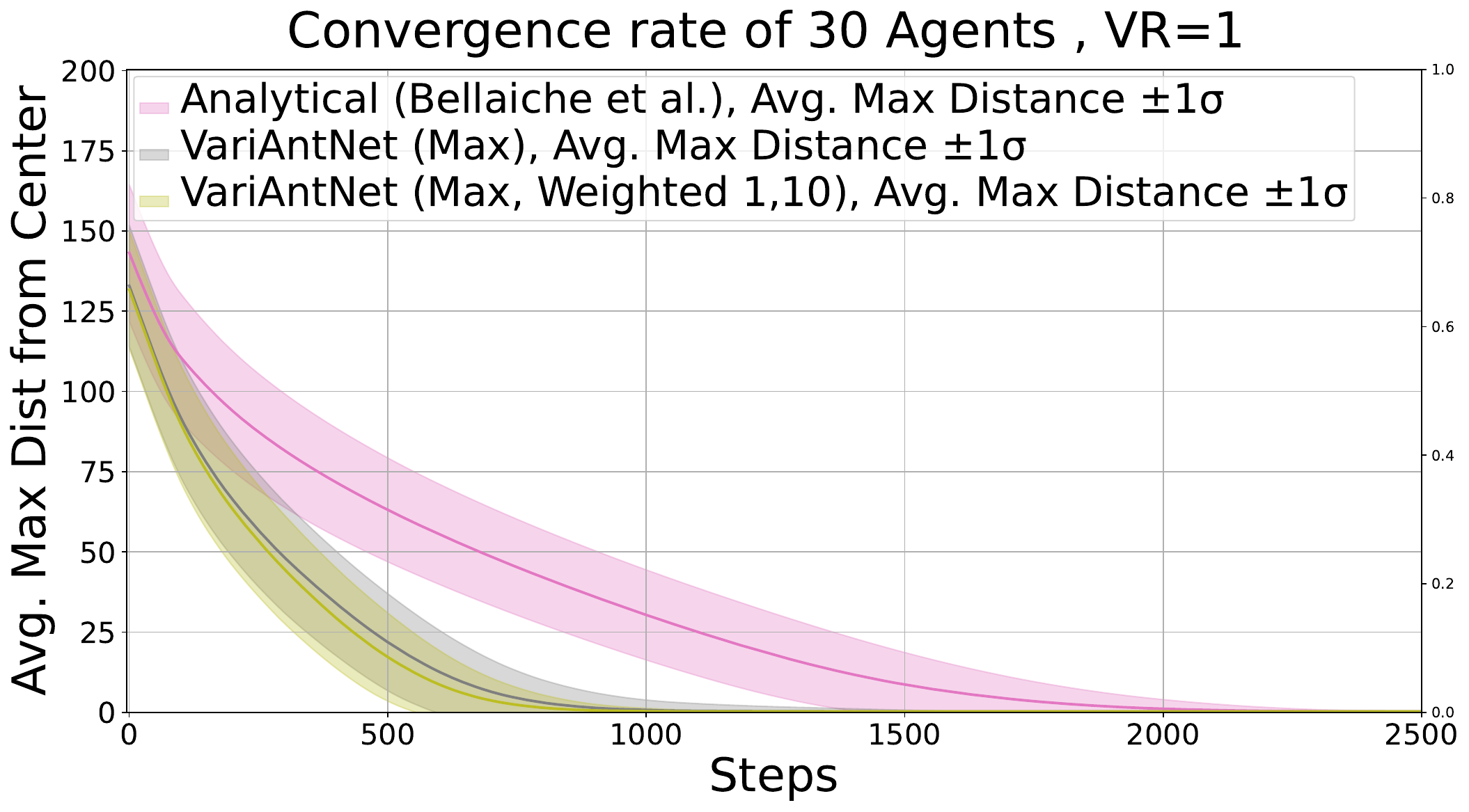}
        \parbox{0.9\textwidth}{\caption*{(c) 30 Agents, initial $VR=1$}}
        \label{fig: Marginal level constellation}
    \end{minipage}
    \vspace{-15pt}
    \caption{Comparison of the average convergence steps of the swarm between the VariAntNet variants and the Analitical Algorithm (Bellaiche~\cite{bellaiche2017continuous}) with varying difficulty levels and swarm sizes. In each plot, the line and the shaded area represent the mean of the maximum distance and its distribution from the swarms' center of mass over 1,000 environments.
    }
    \label{fig: convergence results}
    \vspace{4pt}
\end{figure*}

\begin{table*}[!ht]
\caption{
Comparison of analytical Bellaiche~\cite{bellaiche2017continuous} and VariAntNet models across five difficulty levels of initial swarm configuration: \textquoteleft{}Regular\textquoteright{}, \textquoteleft{}Challenging\textquoteright{}, \textquoteleft {}Marginal\textquoteright{}, VR=0.625, and VR=0.875. Each configuration is evaluated on 1,000 scenarios of 10, 20, and 30 agents. Each cell contains three metrics: (i) Steps, the average convergence number of steps across successfully converged scenarios, (ii) Conn.\%, the percentage of scenarios in which the swarm remained connected throughout the simulation, and (iii) Conv.\%, the percentage of scenarios in which the swarm succeeded in converging. Note that in some scenarios, the analytical Bellaiche~\cite{bellaiche2017continuous} algorithm did not converge within the desired time limit.
}
\centering
\renewcommand{\arraystretch}{1.1}
\resizebox{0.9\textwidth}{!}{
\begin{tabular}{l l | r c c | r c c | r c c}
\hline
\textbf{Level} & \textbf{Model} & \multicolumn{3}{c|}{\textbf{10 Agents}}& \multicolumn{3}{c|}{\textbf{20 Agents}}& \multicolumn{3}{c}{\textbf{30 Agents}} \\
(VR)& & Steps & Conn.\% & Conv.\% & Steps & Conn.\% & Conv.\% & Steps & Conn.\% & Conv.\% \\
\hline
Regular & Analytical (Bellaiche et al.) & 177 & 100.0 & 100.0 & 468 & 100.0 & 100.0 & 828 & 100.0 & 100.0 \\
 & VariAntNet (Max) & 106 & 100.0 & 100.0 & 228 & 100.0 & 100.0 & 370 & 100.0 & 100.0 \\
 & VariAntNet (Max, Weighted 1,10) & 102 & 100.0 & 100.0 & 211 & 100.0 & 100.0 & 337 & 99.9 & 99.9 \\
\hline
0.625 & Analytical (Bellaiche et al.) & 225 & 100.0 & 100.0 & 593 & 100.0 & 100.0 & 1038 & 100.0 & 100.0 \\
 & VariAntNet (Max) & 136 & 100.0 & 100.0 & 293 & 99.2 & 99.2 & 467 & 98.1 & 98.1 \\
 & VariAntNet (Max, Weighted 1,10) & 131 & 100.0 & 100.0 & 271 & 98.7 & 98.7 & 427 & 97.8 & 97.8 \\
\hline
Challenging & Analytical (Bellaiche et al.) & 271 & 100.0 & 100.0 & 702 & 100.0 & 100.0 & 1235 & 100.0 & 100.0 \\
 & VariAntNet (Max) & 165 & 99.5 & 99.5 & 346 & 95.2 & 95.2 & 554 & 91.6 & 91.6 \\
 & VariAntNet (Max, Weighted 1,10) & 159 & 98.8 & 98.8 & 322 & 94.9 & 94.9 & 506 & 90.6 & 90.6 \\
\hline
0.875 & Analytical (Bellaiche et al.) & 314 & 100.0 & 100.0 & 837 & 100.0 & 100.0 & 1450 & 100.0 & 99.8 \\
 & VariAntNet (Max) & 190 & 95.2 & 95.2 & 408 & 81.1 & 81.1 & 638 & 69.6 & 69.6 \\
 & VariAntNet (Max, Weighted 1,10) & 182 & 93.6 & 93.6 & 379 & 78.7 & 78.7 & 587 & 66.8 & 66.8 \\
\hline
Marginal & Analytical (Bellaiche et al.) & 362 & 100.0 & 100.0 & 950 & 100.0 & 100.0 & 1629 & 100.0 & 99.0 \\
 & VariAntNet (Max) & 217 & 84.5 & 84.5 & 451 & 58.4 & 58.4 & 711 & 45.8 & 45.8 \\
 & VariAntNet (Max, Weighted 1,10) & 209 & 82.6 & 82.6 & 418 & 53.8 & 53.8 & 646 & 41.3 & 41.3 \\
\hline
\end{tabular}
}
\label{tab:model_comparison}
\end{table*}

\begin{figure*}[!ht]
\centering
\includegraphics[width=\textwidth]{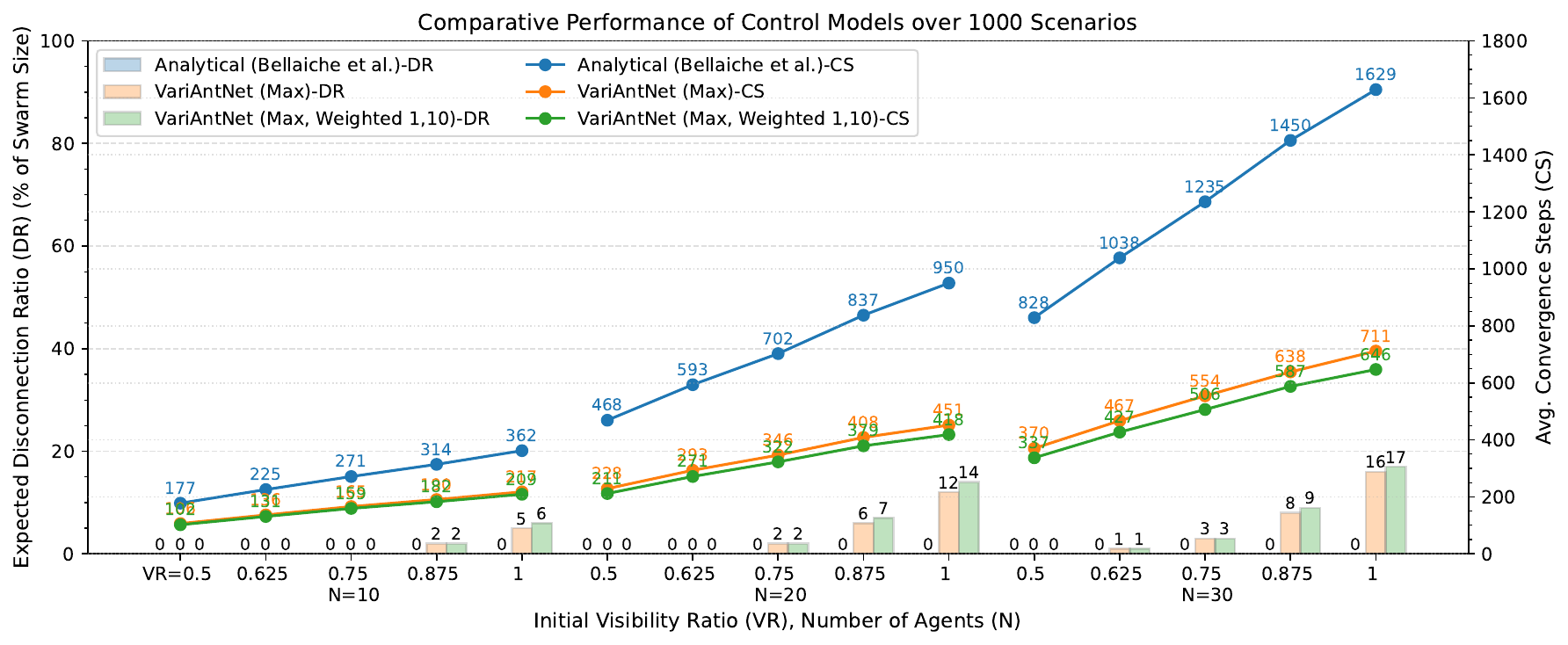}
\vspace{-10pt}
\caption{Comparison of control models based on Expected Disconnection Ratio (DR, bars) and Average Convergence Steps (CS, lines) across 1,000 scenarios. DR is defined as the expected disconnected ratio, the percentage of agents that separate from the largest group. The DR is calculated as a percentage of the initial swarm size. Each group corresponds to a unique combination of the initial constellation's $VR$ and the number of agents $N$.
VariAntNet variants outperform the analytical baseline in convergence speed, particularly under challenging conditions, yet have some degradation in 30-Agent scenarios where the initial configuration exceeds $VR=0.875$.
}
\label{fig: Model Comparison}
\end{figure*}

\subsection{Evaluation Dataset}\label{sec: evaluation dataset}
To evaluate VariAntNet, five types of difficulty-level datasets were generated, each with 1,000 randomized environments. The levels are defined by the initial visibility ratio (AR): \textquoteleft Regular\textquoteright\ with $VR=0.5$, \textquoteleft Challenging\textquoteright\ with $VR=0.75$, and \textquoteleft Marginal\textquoteright\ with $VR=1$ and two  intermediate  levels with $VR=0.625$ and $VR=0.875$. At the \textquoteleft Marginal\textquoteright\ level, the neighboring agents are allowed to be positioned at the edge of the visibility range, significantly increasing the likelihood of a disconnection as depicted in Fig.~\ref{fig: constellation generator fig}(c).

\subsection{Results} \label{sec: Results}
A comparative evaluation was preformed between two VariAntNet variations (see section \ref{sec: Ablation Study}) and the analytical baseline proposed by Bellaiche et al. ~\cite{bellaiche2017continuous}, with a focus on swarm convergence. 
The comparison examines key aspects of convergence efficiency and swarms' connectivity preservation throughout the process. The results are summarized in Table \ref{tab:model_comparison} and visualized in Figs. \ref{fig: convergence results} and \ref{fig: Model Comparison}.
The following VariAntNet variants demonstrated the best performance:
\begin{itemize}
    \item VariAntNet (Max): This variation employs a max pooling aggregation function. As shown in Fig. \ref{fig: Model Comparison}, it outperforms in maintaining swarm connectivity, with a slightly slower convergence rate. 
    \item VariAntNet (Max, Weighted 1,10): This variant incorporates a modification of weight $\alpha, \beta$ in the loss function, ${\mathcal{L}}= \alpha\mathcal{L}_{\text{Cohesiveness}} + \beta\mathcal{L}_{\text{Task}}$, as defined in Section \ref{sec: Geometric Loss Function}. By setting $\alpha=1$ and $\beta=10$, the training process places greater emphasis on the gathering task.\\
\end{itemize}

The evaluation of the models was carried out using 1,000 identical initial constellations, with each scenario executed for up to 2,500 steps, a practical upper bound. The performance metrics include the convergence rate, the mean convergence steps (for successfully converged scenarios), the percentage of scenarios in which the swarm remained fully connected, and the partial size of the largest subgroup in case of swarm separation.
\begin{figure*}[t]
\centering
\includegraphics[width=\textwidth]{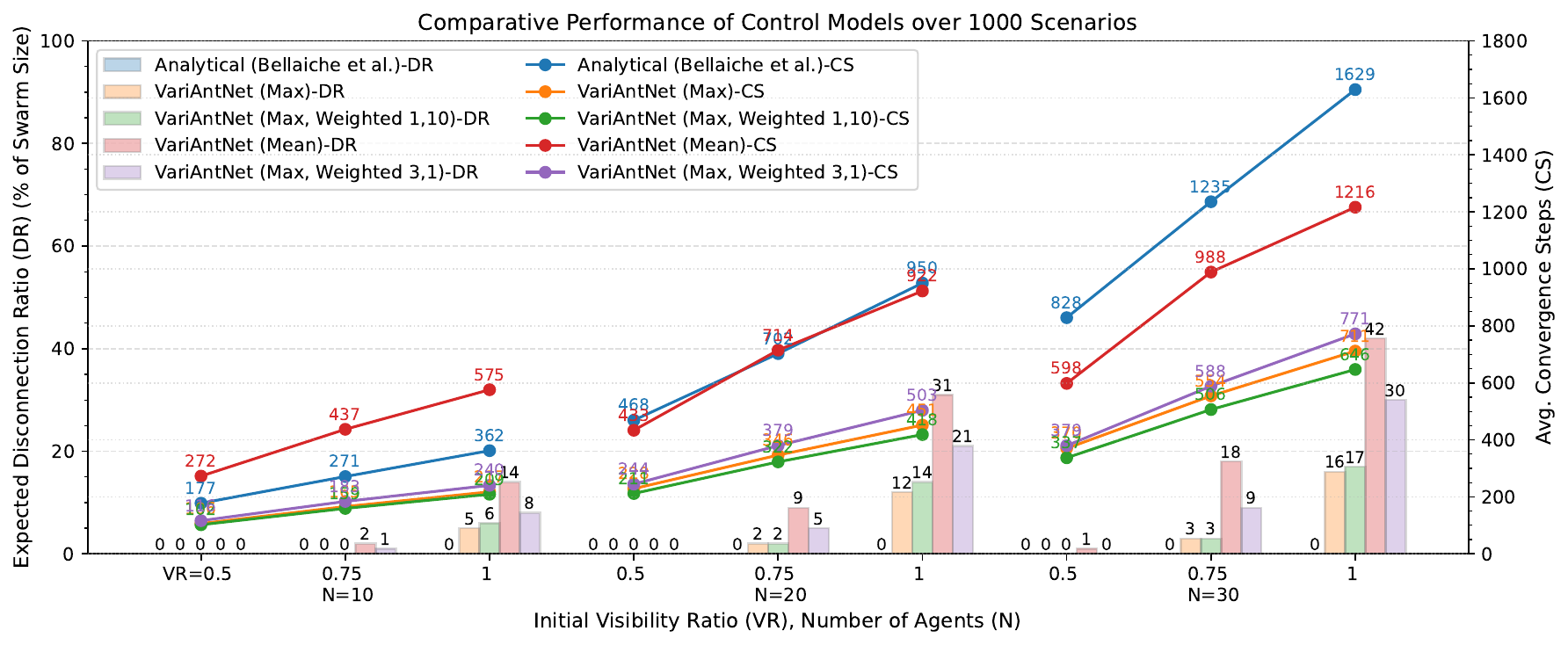}
\vspace{-10pt}
\caption{
Comparison of VariAntNet variants based on the Expected Disconnection Ratio (DR, bars) and Average Convergence Steps (CS, lines) across 1,000 randomized scenarios. DR is defined as the expected size of the largest connected subgroup, expressed as a percentage of the initial swarm size. Each group corresponds to a unique combination of initial constellation parameters: VR and number of agents. VariAntNet (Max) and VariAntNet (Max, Weighted 1,10) outperform the other variants, demonstrating high convergence rates and low expected disconnection across all agent counts and VR values. These variants are therefore selected for deeper analysis in the results section \ref{sec: Results}.
}
\label{fig: Abletion Model Comparison}
\end{figure*}

Our results reveal a trade-off between connectivity and convergence rate.
The VariAntNet (Max, Weighted 1:10) variant significantly outperforms the analytical algorithm proposed by Bellaiche et al. in terms of convergence efficiency. Across multiple settings, it converges up to 2.5 faster than the analytical method. The analytical algorithm converges significantly slower, and in some scenarios, fails to converge in a practical step limit. 
The step limit is defined as 2,500 steps, which is more than 3.5 times the average convergence steps required for VariAntNet (Max, Weighted 1:10).
 
 However, this improved speed comes at the cost of swarm fragmentation. For example, under `Marginal' initial constellation, $(VR = 1)$, the expected disconnection ratio (DR) reaches 17\%, whereas the analytical method maintains connectivity throughout all scenarios. 

In comparison, VariAntNet (Max) converges approximately 7\% slower than the weighted variant but shows slightly improved connectivity. For example, in 20-agent scenarios with an initial constellation of $VR=0.875$ and $VR=1$, the DR is reduced from 7\% to 6\% and from 14\% to 12\%, respectively. This points out that a gain in robustness comes at the expense of slower coordination.

A key challenge in VariAntNet-based models arises from their statistical nature and the agents' inability to sense their sensing boundary. The decentralized policies may lead agents to operate near the edges of their visibility range, increasing the risk of disconnection, particularly in sparse or marginal configurations.

Despite this, the practical acceptability of agent loss is a critical factor. In realistic deployments of low-cost, ant-inspired swarm robots, tolerating our “golden point” up to 10\% disconnection may be acceptable, especially in high-risk or time-sensitive missions. Under such constraints, VariAntNet (Max, Weighted 1:10) offers a compelling advantage, particularly when the initial constellation is under the condition of $VR\leq0.875$, balancing speed with an acceptable level of cohesion loss.


\subsection{Ablation Study} \label{sec: Ablation Study}

To evaluate the contribution of VariAntNets' architectural and algorithmic components, we conducted an ablation study. This study, illustrated in Fig. \ref{fig: Abletion Model Comparison}, assesses the effect of various designs and training  on the convergence rate and cohesion performance: 
\begin{itemize}
    \item \text{Weighted loss function}: By setting various values of $\alpha$ and $\beta$, the cohesiveness loss function $L_{\text{Cohesiveness}}$, effectively trains with different focuses between task and cohesiveness parts. 
    
    \item \text{Aggregation function}: Two variants of the aggregation function were tested: Mean and Max. The Max aggregation function was found to be more effective, as it better captures and represents boundary values.
\end{itemize}

\section{Conclusions} 
\label{sec:conclusion}
In this study, we introduced VariAntNet, a novel NN-based approach for decentralized control of swarms. We evaluated its performance on the multi-agent bearing-only gathering problem and demonstrated that the network successfully learned to perform the task while aiming to maintain swarm cohesion. Compared to an existing analytical method, VariAntNet significantly improved convergence rate across various scenarios, with only a modest reduction in connectivity rate, an acceptable trade-off in many practical settings.
Our findings highlight the strong potential of NN in addressing complex swarm coordination tasks. Moreover, the VariAntNet pipeline is efficient and lightweight for practical use. It is modular by design, allowing its components to be adapted to a wide range of additional swarm-related problems in future research.




\end{document}